\documentclass[11pt]{article}

\usepackage[T1]{fontenc}
\usepackage[utf8]{inputenc}
\usepackage{lmodern}
\usepackage{microtype}
\usepackage[a4paper,margin=28mm]{geometry}
\usepackage{amsmath,amssymb,amsthm,mathtools}
\usepackage{graphicx}
\usepackage{float}
\usepackage{booktabs}
\usepackage{array}
\usepackage{enumitem}
\usepackage{hyperref}
\usepackage{xurl}

\hypersetup{
  colorlinks=true,
  linkcolor=black,
  citecolor=black,
  urlcolor=blue,
  pdftitle={From Ordered Bernoulli Levels to Critical-Line Geometry: Integer Quantization, Bernoulli Residual Phase, and Prime-Power Spectra},
  pdfauthor={Y. Kenan Yilmaz}
}

\newtheorem{proposition}{Proposition}[section]
\newtheorem{corollary}[proposition]{Corollary}
\theoremstyle{definition}
\newtheorem{definition}[proposition]{Definition}
\theoremstyle{remark}
\newtheorem{remark}[proposition]{Remark}

\newcommand{\Pp}{\mathbb{P}}
\newcommand{\Nn}{\mathbb{N}}
\newcommand{\Zz}{\mathbb{Z}}
\newcommand{\Cc}{\mathbb{C}}
\newcommand{\Rr}{\mathbb{R}}
\newcommand{\Log}{\operatorname{Log}}
\newcommand{\Arg}{\operatorname{Arg}}
\newcommand{\wtB}{\widetilde{B}_1}

\title{\textbf{From Ordered Bernoulli Levels to Critical-Line Geometry:}\\
Integer Quantization, Bernoulli Residual Phase, and Prime-Power Spectra}
\author{Y. Kenan Y{\i}lmaz\\[0.35em]
\normalsize\href{mailto:yilmazykenan@gmail.com}{\texttt{yilmazykenan@gmail.com}}}
\date{September 2026}

\begin{document}
\maketitle

\begin{abstract}
We begin with the ordered Bernoulli-word kernel
\[
f(p,n,k)=p^k(1-p)^{n-k},
\]
using no binomial coefficient and no prior zeta-function input. The first inverse-integer level, $2^{-n}$, selects the unique real split-independent anchor $p=1/2$. Preserving complementarity under complex continuation by $1-z=\overline z$ forces the full conjugation-symmetric vertical geometry $z=1/2\pm iu$. On the central normalized level branch, every inverse-integer label $q\ge 2$ likewise lies on $\Re\kappa=1/2$.

A second exact coordinate is obtained directly from the conjugate pair,
\[
Q(z)=z(1-z)=\frac14+u^2.
\]
It has the sharp minimum $Q_{\min}=1/4$ at the central point and inverse map $z_L^{\pm}=1/2\pm i\sqrt{L-1/4}$. Restricting $L$ to integers gives an arithmetic quantization of the continuous vertical geometry, and unique factorization resolves each nonunit integer level into prime-generator occupations.

For a tabulated critical-line zero $\rho_k=1/2+i\gamma_k$, the same coordinate gives the exact real level $L_k=1/4+\gamma_k^2$. We refine nearest-integer rounding to the lossless decomposition
\[
L_k=N_k+\delta_k,\qquad
N_k=\left\lfloor L_k+\frac12\right\rfloor,\qquad
\delta_k=\wtB\!\left(\gamma_k^2+\frac34\right),
\]
where $\wtB(x)=\{x\}-1/2$ is the periodic first Bernoulli function. The factorization of $N_k$ forms an integer/arithmetic channel, while
\[
Z_k=e^{2\pi i\delta_k}=i\,e^{2\pi i\gamma_k^2}
\]
is a circular residual-phase channel; the pair $(N_k,Z_k)$ is lossless. This local periodic-Bernoulli role is distinct from the indexed Bernoulli numbers $B_2,B_4,\ldots$ entering Euler--Maclaurin representations of zeta.

Classical explicit formulas couple the von Mangoldt prime-power spectrum to the zeta-zero spectrum globally, motivating conditional Weyl statistics for testing whether arithmetic classes of $N_k$ leave a nontrivial signature in $Z_k$. Preliminary coarse factorization and low-order Bernoulli-mode controls are negative and are retained as such.

The technical lift replaces the integer total exponent by a distinct complex exponent $s$ and solves
\[
z^k(1-z)^{s-k}=m^{-s}.
\]
Prime powers then occupy one prime coordinate repeatedly, general composites are finite prime-coordinate superpositions, and the scalar level coordinate is an affine projection through $\log m$. The common atom $m^{-s}$ organizes the two classical zeta assemblies: global Dirichlet enumeration and local Euler prime-power generation. The resulting architecture links primitive Bernoulli level geometry, integer quantization, exact integer--residual decomposition, circular phase, prime-factor coordinates, and zeta assembly while keeping exact identities, classical bridges, numerical controls, and open statistical questions explicitly separated.
\end{abstract}

\noindent\textbf{Keywords.} Riemann critical line; ordered Bernoulli likelihood; integer quantization; periodic Bernoulli function; residual phase; squared zero ordinates; Weyl statistics; von Mangoldt function; prime powers; prime factorization; Dirichlet series; Euler product; Bohr lift.

\section{Introduction}
Riemann's 1859 memoir placed the zeros of the analytically continued zeta function at the center of the relation between complex analysis and the distribution of primes \cite{Riemann1859,Titchmarsh1986}. The construction studied here approaches the familiar real-part-$1/2$ geometry from a different entry point. It begins not with $\zeta(s)$, but with the ordered Bernoulli-word kernel
\begin{equation}
 f(p,n,k)=p^k(1-p)^{n-k},
 \label{eq:kernel}
\end{equation}
and asks what geometry is obtained when inverse-integer levels are solved in a broad real/complex domain.

The first level $2^{-n}$ uniquely selects $p=1/2$ as the real split-independent anchor. Preserving the complementary pair as complex conjugates gives
\[
z=\frac12+iu,\qquad 1-z=\frac12-iu,
\]
so a full conjugation-symmetric vertical line is obtained before a Dirichlet series, Euler product, or prime classification is introduced. A normalized split coordinate $\kappa=k/n$ inherits the same real part on the central phase branch.

The quadratic coordinate
\begin{equation}
Q(z)=z(1-z)=\frac14+u^2
\label{eq:Q}
\end{equation}
then supplies an exact real level geometry. Integer restriction of $Q$ produces a discrete family on the same vertical line. Known critical-line zeros may be passed through this coordinate diagnostically via $L_k=1/4+\gamma_k^2$. Each such real level admits an exact nearest-integer shell and centered residual, and the residual can be represented on the unit circle. This isolates $\gamma_k^2\bmod 1$ as the modular variable naturally associated with the quadratic coordinate.

A separate arithmetic layer uses the prime-exponent vector of an integer. Prime powers occupy one prime coordinate repeatedly, while integers with several distinct prime divisors occupy several coordinates. The scalar level lift depends only on the logarithmic resultant $\log m$, so it is a one-dimensional affine projection of the full factorization state. This is closely related in spirit to the classical Bohr multi-index viewpoint for Dirichlet series \cite{Hedenmalm1997}.

Finally, a distinct complex exponent $s$ is introduced in
\begin{equation}
F(z,s,k)=z^k(1-z)^{s-k}.
\label{eq:F}
\end{equation}
Solving $F(z,s,k)=m^{-s}$ realizes each Dirichlet atom inside the same multiplicative architecture. The common atom $m^{-s}$ then appears in the two standard zeta assemblies,
\[
\zeta(s)=\sum_{m\ge1}m^{-s}=\prod_{r\in\Pp}(1-r^{-s})^{-1},\qquad \Re s>1.
\]

The manuscript does \emph{not} claim a proof of the Riemann Hypothesis. The derived $1/2$ geometry belongs first to the Bernoulli coordinate $z$ and to the normalized split coordinate $\kappa$ under a specified conjugation-preserving continuation. The later Dirichlet exponent $s$ is introduced independently and is not assumed or proved to satisfy $\Re s=1/2$. The relationship with zeta zeros is therefore diagnostic and structural: exact coordinate identities are kept separate from classical prime-power/zero couplings, finite-sample numerical controls, and open conditional-equidistribution questions.

The principal exact statements are: (i) the binary level singles out the real anchor; (ii) the chosen complement-preserving continuation forces a conjugation-symmetric $1/2$ line; (iii) the quadratic coordinate has a sharp minimum and exact inverse; (iv) integer restriction yields arithmetic level quantization; (v) zero-induced levels have an exact periodic-Bernoulli residual; (vi) circularization yields a lossless integer/phase pair; (vii) prime-exponent space separates one-generator and multi-generator arithmetic states; and (viii) the complex-$s$ lift realizes the common Dirichlet atom that underlies both zeta assemblies.

\section{Ordered Bernoulli levels and the binary anchor}
The kernel \eqref{eq:kernel} is the likelihood of one specified ordered Bernoulli word containing $k$ events of type $p$ and $n-k$ complementary events of type $1-p$. No binomial coefficient is required because permutation classes are not being counted.

\begin{proposition}[Unique split-independent binary anchor]
Let $0<p<1$ and $n>0$. The map
\[
k\longmapsto p^k(1-p)^{n-k}
\]
is independent of $k$ if and only if $p=1/2$. At this unique point its value is $2^{-n}$ for every finite real or complex $k$.
\end{proposition}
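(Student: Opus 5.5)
The plan is to rewrite the kernel in a form where the dependence on $k$ is isolated in a single exponential factor. Since $0<p<1$, both $p$ and $1-p$ are positive reals, so real logarithms are available and complex powers are defined unambiguously through $a^w=e^{w\log a}$ for $a>0$. I will write
\[
p^k(1-p)^{n-k}=(1-p)^n\left(\frac{p}{1-p}\right)^{k}=(1-p)^n\exp\!\Bigl(k\log\frac{p}{1-p}\Bigr),
\]
which holds for every real or complex $k$ because $\log$ of a product of positive reals splits additively. The prefactor $(1-p)^n$ is nonzero and does not involve $k$. Hence the map is independent of $k$ if and only if $k\mapsto\exp(k\lambda)$ is constant, where $\lambda=\log\bigl(p/(1-p)\bigr)\in\Rr$.

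For the ``if'' direction, I take $p=1/2$. Then $\lambda=0$, so the map is constant in $k$ with value $(1/2)^n=2^{-n}$. This value does not depend on $k$, whether $k$ is real or complex. The claim that this is the value ``at this unique point'' then follows once uniqueness is established.

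For the ``only if'' direction, I will compare any two distinct values of $k$, for instance $k=0$ and $k=1$. The real range certainly contains two points, and this suffices in either domain reading. The equality $e^{0}=e^{\lambda}$ with $\lambda$ real forces $\lambda=0$, since the exponential is injective on $\Rr$. Alternatively, differentiating in $k$ gives $\lambda e^{k\lambda}(1-p)^n\equiv 0$, and hence $\lambda=0$. From $\lambda=0$ I get $p/(1-p)=1$, that is, $p=1/2$.

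The only step needing care is the domain convention for complex $k$ and the corresponding branch choice. Here the potential obstacle is that $e^{\lambda}=1$ with complex $\lambda$ only forces $\lambda\in 2\pi i\Zz$. I resolve this by noting that $\lambda$ is real because $p/(1-p)>0$, and by using the principal real logarithm of positive bases, so no branch ambiguity arises. I will also remark that the hypothesis $n>0$ is used only so that the value $2^{-n}$ is the stated nontrivial level; the independence argument itself does not need it.
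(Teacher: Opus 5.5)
Your proposal is correct and follows the paper's proof essentially verbatim. Both factor out $(1-p)^n$, isolate the $k$-dependence in $\exp\bigl(k\log\frac{p}{1-p}\bigr)$, and conclude that independence forces $\log\frac{p}{1-p}=0$, i.e.\ $p=1/2$, with value $2^{-n}$; your comparison of $k=0$ and $k=1$, and the remark that $\lambda$ is real so no $2\pi i\Zz$ ambiguity arises, just make explicit the one step the paper states without detail.
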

\begin{proof}
For positive real $p$,
\[
p^k(1-p)^{n-k}=(1-p)^n\exp\!\left(k\log\frac{p}{1-p}\right).
\]
Independence from $k$ is equivalent to $\log(p/(1-p))=0$, hence $p=1-p$ and $p=1/2$. Substitution gives $f(1/2,n,k)=2^{-n}$.
\end{proof}

\begin{proposition}[Centered solution of the binary level]
At the centered split $k=n/2$, the real level equation
\[
p^{n/2}(1-p)^{n/2}=2^{-n},\qquad 0<p<1,
\]
has the unique solution $p=1/2$, with algebraic multiplicity two in the resulting quadratic equation.
\end{proposition}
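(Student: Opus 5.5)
The plan is to collapse the equation to a quadratic in $p$. For $0<p<1$ both $p$ and $1-p$ are positive, so we may write
\[
p^{n/2}(1-p)^{n/2}=\bigl(p(1-p)\bigr)^{n/2}.
\]
Likewise $2^{-n}=(1/4)^{n/2}$. The map $t\mapsto t^{n/2}$ is strictly increasing on $(0,\infty)$ because $n>0$. The level equation is therefore equivalent on $0<p<1$ to
\[
p(1-p)=\tfrac14.
\]
Taking positive real $n/2$-th roots is legitimate precisely because we are on the positive reals. No branch issue arises, in contrast with the later complex continuation.

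Next, rewrite $p(1-p)=1/4$ as
\[
p^2-p+\tfrac14=0,
\qquad\text{that is,}\qquad
\bigl(p-\tfrac12\bigr)^2=0.
\]
This quadratic has discriminant $1-4\cdot\tfrac14=0$. Its only root is $p=1/2$, and it is a double root. The root lies in $(0,1)$, so it is an admissible solution. Substituting back gives $(1/4)^{n/2}=2^{-n}$, which confirms it directly. Consistency with the preceding proposition is immediate: at $p=1/2$ the kernel equals $2^{-n}$ for every $k$, in particular for $k=n/2$.

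For uniqueness one can also argue via AM--GM: $p(1-p)\le 1/4$, with equality exactly at $p=1/2$. This shows that $1/4$ is the strict maximum of $p(1-p)$ on $(0,1)$. The double root is the algebraic counterpart of this tangency.

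The only point needing care is the phrase ``algebraic multiplicity two in the resulting quadratic equation.'' I would state explicitly that the resulting quadratic is $p^2-p+1/4=0$, obtained after the monotone root extraction. The multiplicity is the order of vanishing of $(p-1/2)^2$, not a property of the original transcendental equation in $p$ for non-integer $n/2$. With that clarified, the proof is routine.
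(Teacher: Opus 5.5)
Your proof is correct and follows essentially the same route as the paper, which raises both sides to the power $2/n$ to obtain $p(1-p)=\frac14$ and then rewrites this as $(p-\frac12)^2=0$; this is exactly your monotone root extraction followed by completing the square. Your explicit use of $n>0$, the AM--GM tangency remark, and your clarification that the multiplicity refers to the resulting quadratic are sound additions that the paper leaves implicit.
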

\begin{proof}
Raising both sides to the power $2/n$ gives
\[
p(1-p)=\frac14,
\]
so
\[
\left(p-\frac12\right)^2=0.
\]
\end{proof}

\begin{remark}[Algebraic and arithmetic roles of 2]
The level $2^{-n}$ arises algebraically from the unique equal split of the Bernoulli complement pair. Separately, $2$ is the first prime and the unique even prime. The construction uses both facts, but does not identify them as logically equivalent.
\end{remark}

\section{Complex phase and conjugation-symmetric continuation}
We first keep $p$ real and allow the exponent split to become complex.

\begin{proposition}[Vertical phase lattice]
Let $0<p<1$, $p\neq 1/2$, $n\in\Nn$, and $q>0$. Write $k=a+i\tau$. The level equation
\begin{equation}
p^k(1-p)^{n-k}=q^{-n}
\label{eq:realp-level}
\end{equation}
holds precisely when
\begin{align}
a&=n\,\frac{\log[1/(q(1-p))]}{\log[p/(1-p)]},\\
\tau&=\frac{2\pi j}{\log[p/(1-p)]},\qquad j\in\Zz.
\end{align}
Hence the solutions form a vertical arithmetic lattice in the complex $k$-plane.
\end{proposition}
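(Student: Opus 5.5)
The plan is to take logarithms using the principal real logarithm of the positive reals $p$ and $1-p$, which fixes the meaning of complex powers. With that convention
\[
p^k(1-p)^{n-k}=(1-p)^n\exp\!\bigl(k\,\ell\bigr),\qquad \ell:=\log\frac{p}{1-p},
\]
exactly as in the proof of the split-independence proposition. The hypothesis $p\neq 1/2$ guarantees $\ell\neq 0$, and $\ell$ is real. Equation \eqref{eq:realp-level} then becomes
\[
\exp(k\ell)=\frac{q^{-n}}{(1-p)^n}=\exp\!\bigl(n\log[1/(q(1-p))]\bigr),
\]
where the right side is a positive real number, so its real logarithm is available.

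Next I use the fact that $e^{w_1}=e^{w_2}$ holds if and only if $w_1-w_2\in 2\pi i\Zz$. This turns the level equation into the equivalent condition
\[
k\ell=n\log[1/(q(1-p))]+2\pi i j,\qquad j\in\Zz.
\]
Writing $k=a+i\tau$ and using that $\ell$, $n$ and $\log[1/(q(1-p))]$ are all real, I separate real and imaginary parts. The real part gives $a\ell=n\log[1/(q(1-p))]$, and the imaginary part gives $\tau\ell=2\pi j$. Dividing by $\ell\neq0$ yields the stated formulas for $a$ and $\tau$.

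Every step is an equivalence, so the converse direction is automatic: any $a$ and $\tau$ of this form satisfy the equation. The resulting solution set is $\{a+2\pi i j/\ell : j\in\Zz\}$. Its real part is fixed and its imaginary parts are spaced by $2\pi/|\ell|$, so it is a vertical arithmetic lattice.

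The only delicate point is the branch convention for $p^k$ with complex $k$. I will state explicitly that $p^k:=e^{k\log p}$ with the real logarithm. With that choice the $2\pi i\Zz$ ambiguity enters only through the exponential equation, not through the definition of the powers. The case $p=1/2$ is excluded because there $\ell=0$ and the equation degenerates to the earlier anchor proposition.
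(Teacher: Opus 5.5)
Your proof is correct and is essentially the paper's argument. The paper writes $p^k(1-p)^{n-k}=p^a(1-p)^{n-a}\exp\bigl(i\tau\log\frac{p}{1-p}\bigr)$ and then matches modulus and phase modulo $2\pi$ against the positive real $q^{-n}$; this is the same as your real-part/imaginary-part split of $k\ell=n\log[1/(q(1-p))]+2\pi i j$, and your explicit convention $p^k:=e^{k\log p}$ together with the observation that $p\neq 1/2$ gives $\ell\neq 0$ make precise two points the paper leaves implicit.
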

\begin{proof}
Since $p$ and $1-p$ are positive real numbers,
\[
p^k(1-p)^{n-k}=p^a(1-p)^{n-a}\exp\!\left(i\tau\log\frac{p}{1-p}\right).
\]
Equality with the positive real number $q^{-n}$ requires equality of the magnitudes and phase closure modulo $2\pi$. Solving the two resulting equations gives the formulas above.
\end{proof}

\begin{remark}[A clean example]
For $q=3$ and $p=2/3$, the real part is $a=0$ and
\[
k_j=i\frac{2\pi j}{\log 2}.
\]
The spacing $2\pi/\log 2$ is a consequence of phase closure under complex exponentiation; it is not a statement about zeta-zero spacing.
\end{remark}

We next complexify the complementary Bernoulli pair by imposing the continuation rule
\begin{equation}
1-z=\overline z.
\label{eq:conjrule}
\end{equation}
Together with $z+(1-z)=1$, this yields
\[
z+\overline z=1,
\]
and therefore
\begin{equation}
z=\frac12+iu,\qquad 1-z=\frac12-iu,\qquad u\in\Rr.
\label{eq:zline}
\end{equation}
The real part $1/2$ is thus a consequence of the adopted complement-preserving continuation rule.

Write
\begin{equation}
z=\varrho e^{i\theta},\qquad 1-z=\varrho e^{-i\theta},\qquad
\varrho=\sqrt{\frac14+u^2},\qquad \theta=\arctan(2u).
\label{eq:polar}
\end{equation}
Because both factors lie in the open right half-plane, we use the principal logarithm
\[
\Log z=\log\varrho+i\theta,\qquad
\Log(1-z)=\log\varrho-i\theta.
\]

\begin{remark}[Derived statement versus continuation choice]
The real anchor $p=1/2$ is derived from the binary level together with split symmetry. Equation \eqref{eq:conjrule} is an additional complex-continuation rule. The statement $\Re z=1/2$ follows from that rule; the level equation alone does not force every complex parameter to lie on this line.
\end{remark}

\begin{proposition}[Integer-exponent level family]
Let $u\neq0$, $n\in\Nn$, and $q>0$. Under the principal logarithm, the solutions of
\[
z^k(1-z)^{n-k}=q^{-n}
\]
are
\begin{equation}
k_j(n;q,u)=\frac n2+\frac{\pi j}{\theta}+i\frac{n\log(q\varrho)}{2\theta},\qquad j\in\Zz.
\label{eq:kfamily}
\end{equation}
In particular, on the central phase branch $j=0$,
\[
\Re k_0=\frac n2.
\]
\end{proposition}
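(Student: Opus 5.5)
The plan is to take principal logarithms and reduce the equation to a single complex-linear equation in $k$, exactly as in the vertical phase lattice proposition. By the polar form \eqref{eq:polar}, and with principal powers defined by $w^k=\exp(k\Log w)$, the left-hand side becomes
\[
z^k(1-z)^{n-k}=\exp\bigl(k(\log\varrho+i\theta)+(n-k)(\log\varrho-i\theta)\bigr)
=\exp\bigl(n\log\varrho+i\theta(2k-n)\bigr).
\]
The right-hand side is the positive real $q^{-n}=\exp(-n\log q)$. The equation is therefore equivalent to
\[
n\log\varrho+i\theta(2k-n)=-n\log q+2\pi i j,\qquad j\in\Zz,
\]
since $\exp$ has kernel $2\pi i\Zz$.

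Next, solve this for $k$. The hypothesis $u\neq 0$ gives $\theta=\arctan(2u)\neq 0$, so I can divide by $2i\theta$:
\[
2k-n=\frac{2\pi j}{\theta}+\frac{-n\log q-n\log\varrho}{i\theta}
=\frac{2\pi j}{\theta}+i\,\frac{n\log(q\varrho)}{\theta}.
\]
Halving gives \eqref{eq:kfamily}. The converse is direct: substituting $k_j$ back into the exponent gives $-n\log q+2\pi i j$, so each $k_j$ is indeed a solution. Finally, because $\theta$ and $\log(q\varrho)$ are real, setting $j=0$ kills the real shift $\pi j/\theta$ and yields $\Re k_0=n/2$.

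The only step needing care is the branch convention. Here "under the principal logarithm" must mean that $z^k$ and $(1-z)^{n-k}$ are defined through $\Log$ for complex exponents. The identity $(1-z)^{n-k}=\exp((n-k)\Log(1-z))$ is then a definition rather than a consequence of power laws. I will state this explicitly and note that both factors lie in the right half-plane, so $\theta\in(-\pi/2,\pi/2)$ and no branch crossing occurs. The sign of $\theta$, which equals the sign of $u$, does not affect the argument, because the formula holds for either sign.
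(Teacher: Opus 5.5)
Your proof is correct and follows the paper's argument: you rewrite the left side via the polar form as $\exp\bigl(n\log\varrho+i\theta(2k-n)\bigr)$, match it against $-n\log q+2\pi i j$, and solve for $k$ using $\theta\neq0$. Phrasing the matching step through $\exp$ and its kernel $2\pi i\Zz$ is slightly cleaner than the paper's line, which writes the expression as the principal $\Log$ of the product even though for complex $k$ that expression need not be a principal value.
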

\begin{proof}
Using \eqref{eq:polar},
\[
\Log\bigl(z^k(1-z)^{n-k}\bigr)=n\log\varrho+i\theta(2k-n).
\]
Equality with the positive real level $q^{-n}$ is equivalent to
\[
n\log\varrho+i\theta(2k-n)=-n\log q+2\pi i j,
\]
which rearranges to \eqref{eq:kfamily}.
\end{proof}

After normalization $\kappa=k/n$, the central branch becomes
\begin{equation}
\kappa_q=\frac12+i\frac{\log(q\varrho)}{2\theta},\qquad q\ge2,
\label{eq:kappa}
\end{equation}
so
\[
\Re\kappa_q=\frac12.
\]
For $q=2$, the continuous function
\[
t_2(u)=\frac{\log(2\varrho(u))}{2\theta(u)}
\]
extends by $t_2(0)=0$, is odd, and has $|t_2(u)|\to\infty$ as $|u|\to\infty$. Thus the binary level supplies both conjugate directions of the normalized vertical geometry.

\section{Quadratic level coordinate and integer quantization}
Define
\[
Q(z)=z(1-z).
\]
On the conjugation-symmetric line \eqref{eq:zline},
\begin{equation}
Q(z)=\left(\frac12+iu\right)\left(\frac12-iu\right)=\frac14+u^2.
\label{eq:quadratic}
\end{equation}
Hence $Q(z)\ge1/4$, with equality if and only if $u=0$. The central point is therefore the unique minimum state:
\[
Q_{\min}=\frac14\quad\Longleftrightarrow\quad z=\frac12.
\]

Conversely, every real $L\ge1/4$ determines a conjugate pair
\begin{equation}
z_L^{\pm}=\frac12\pm i\sqrt{L-\frac14}.
\label{eq:inverseQ}
\end{equation}
Restricting $L$ to positive integers yields the exact discrete family
\begin{equation}
z_m^{\pm}=\frac12\pm i\sqrt{m-\frac14},\qquad m\in\Zz_{>0}.
\label{eq:integerquant}
\end{equation}
This is an arithmetic quantization of the continuous level coordinate. It does not, by itself, identify the resulting points with any externally specified spectral set.

For a nonunit integer level $m\ge2$, unique prime factorization separates three structural arithmetic forms:
\begin{center}
\begin{tabular}{lll}
\toprule
Level type & Arithmetic form & Generator interpretation\\
\midrule
Prime & $m=p$ & single prime generator\\
Proper prime power & $m=p^a$, $a\ge2$ & repeated single generator\\
Multi-prime composite & $m=\prod_j p_j^{a_j}$ & finite superposition of generators\\
\bottomrule
\end{tabular}
\end{center}
The integer $2$ may be isolated further because it also has the prior binary-anchor role.

\subsection{Zeta zeros as a diagnostic of the level coordinate}
For a nontrivial zeta zero known on the critical line, write
\[
\rho_k=\frac12+i\gamma_k.
\]
The same quadratic map induces
\begin{equation}
L_k=Q(\rho_k)=\frac14+\gamma_k^2.
\label{eq:Lk}
\end{equation}
Using the standard tabulated ordinates \cite{OdlyzkoZeros}, the first zero $\gamma_1=14.134725\ldots$ gives
\[
L_1=200.0404548\ldots.
\]
The nearby integer $200=2^3\cdot5^2$ lies in the multi-generator composite class.

A nearest-integer diagnostic
\[
N_k=\operatorname{round}(L_k)
\]
is not itself an identity. For the first forty tabulated zeros, five rounded levels are prime. The observed count is $5/40=12.5\%$. As a first-order control, the prime number theorem suggests local prime density $1/\log x$ \cite{MontgomeryVaughan2007}. Summing over the same forty rounded levels gives
\[
\sum_{k=1}^{40}\frac{1}{\log N_k}\approx4.83.
\]
Thus the observed count $5$ is close to the heuristic expectation $4.83$ and supplies no evidence of preferential prime enrichment. This negative control is retained to separate exact coordinate identities from numerical pattern-seeking.

\section{Exact integer--residual decomposition and Bernoulli phase}
The diagnostic rounding can be refined to an exact centered decomposition:
\begin{equation}
N_k=\left\lfloor L_k+\frac12\right\rfloor,
\qquad
\delta_k=L_k-N_k,
\qquad
-\frac12\le\delta_k<\frac12.
\label{eq:split}
\end{equation}
Hence
\[
L_k=N_k+\delta_k
\]
exactly, and $N_k$ has a unique factorization.

For the first tabulated ordinate,
\[
L_1\approx200.0404548324,\qquad
N_1=200=2^3 5^2,\qquad
\delta_1\approx0.0404548324.
\]

Let $B_1(x)=x-1/2$ and define the periodic first Bernoulli function
\[
\wtB(x)=B_1(\{x\})=\{x\}-\frac12.
\]
Then the centered residual is exactly
\begin{equation}
\delta_k
=\left\{L_k+\frac12\right\}-\frac12
=\wtB\!\left(\gamma_k^2+\frac34\right).
\label{eq:bernoulliresidual}
\end{equation}
Consequently,
\begin{equation}
\frac14+\gamma_k^2
=\prod_j p_j^{a_j}
+\wtB\!\left(\gamma_k^2+\frac34\right),
\label{eq:fullsplit}
\end{equation}
where the product denotes the prime factorization of $N_k$.

The centered floor/$\wtB$ decomposition is universal for every real number. The special input here is the level $L_k=1/4+\gamma_k^2$, which is tied simultaneously to the quadratic Bernoulli coordinate and to a zeta-zero ordinate.

\subsection{Two distinct Bernoulli roles}
The construction uses two mathematically different Bernoulli structures. The local object $\wtB(\gamma_k^2+3/4)$ is the exact centered fractional residual of an individual level. The indexed Bernoulli numbers $B_2,B_4,B_6,\ldots$ are classical correction coefficients in Euler--Maclaurin and Gamma--zeta analysis \cite{Apostol1976,Titchmarsh1986}. For $n\ge2$,
\begin{equation}
B_n=-\frac{2n!\,\zeta(n)}{(2\pi)^n}\cos\!\left(\frac{\pi n}{2}\right),
\label{eq:Bn}
\end{equation}
and in particular
\[
B_{2m}=(-1)^{m+1}\frac{2(2m)!\,\zeta(2m)}{(2\pi)^{2m}}.
\]
The cosine supplies the parity/sign cycle; the remaining factor supplies the amplitude. No identity is asserted in which $\delta_k$ is a sum of higher Bernoulli corrections.

A defensible directional chain is therefore
\[
\{B_{2m}\}
\longrightarrow \text{Euler--Maclaurin/zeta analysis}
\longrightarrow \gamma_k
\longrightarrow \frac14+\gamma_k^2
\longrightarrow \wtB\!\left(\gamma_k^2+\frac34\right).
\]

\subsection{Prime-power spectrum and common-source interpretation}
The classical explicit-formula side uses the full von Mangoldt prime-power spectrum
\[
\Lambda(p^a)=\log p,\qquad a\ge1,
\]
and couples it globally to the nontrivial zeta zeros \cite{Titchmarsh1986,MontgomeryVaughan2007}. The structural picture used here is therefore
\[
\{\text{prime powers with von Mangoldt weights}\}
\longleftrightarrow
\{\gamma_k\}
\longrightarrow L_k
\longrightarrow (N_k,\delta_k),
\]
where the double arrow denotes the classical explicit-formula coupling, not a one-to-one correspondence between individual prime powers and individual zeros.

While the nearest-integer cell is unchanged, differentiation gives
\begin{equation}
d\delta=dL=2\gamma\,d\gamma.
\label{eq:sensitivity}
\end{equation}
For $k=1$, the displacement of $\gamma_1$ from the integer shell is approximately
\[
\gamma_1-\sqrt{200-\frac14}\approx0.0014311,
\]
illustrating that the residual phase is more sensitive to zero-ordinate precision than the ordinate itself.

\subsection{Circular residual coordinate}
Because $\delta_k$ has a coordinate cut at $\pm1/2$, comparisons near that boundary are more naturally made on the unit circle. Define
\begin{equation}
Z_k=e^{2\pi i\delta_k}=e^{2\pi iL_k}=i\,e^{2\pi i\gamma_k^2}.
\label{eq:Zk}
\end{equation}
The integer shell disappears because $e^{2\pi iN_k}=1$. Thus the level has two complementary coordinates:
\[
\text{integer/arithmetic channel: }N_k=\prod_j p_j^{a_j},
\]
\[
\text{circular/residual channel: }Z_k=i\,e^{2\pi i\gamma_k^2}.
\]
With the centered branch $\Arg Z_k\in[-\pi,\pi)$,
\[
\delta_k=\frac{\Arg Z_k}{2\pi},
\]
so $(N_k,Z_k)$ is lossless. Locally,
\[
\frac{dZ}{d\gamma}=4\pi i\gamma Z,
\qquad
d(\arg Z)=4\pi\gamma\,d\gamma.
\]
The modular core behind this channel is $\gamma_k^2\bmod1$.

\subsection{Conditional Weyl statistics}
The literature contains classical uses of Bernoulli numbers and periodic Bernoulli functions in Euler--Maclaurin/zeta analysis, together with results on distribution modulo one for certain sequences of zero ordinates \cite{Apostol1976,CicekGonek2024}. The present open question is not a deterministic formula $\delta_k=F(\text{factorization of }N_k)$, but whether arithmetic classes of $N_k$ leave a Fourier signature in $Z_k$.

For an arithmetic class $\mathcal A$, define
\[
\mathcal A_K=\{k\le K:N_k\in\mathcal A\}.
\]
The conditional circular moment is
\begin{equation}
W_h(\mathcal A;K)
=\frac{1}{|\mathcal A_K|}\sum_{k\in\mathcal A_K}Z_k^h
=\frac{1}{|\mathcal A_K|}\sum_{k\in\mathcal A_K}e^{2\pi ih(\gamma_k^2+1/4)},
\qquad h=1,2,\ldots.
\label{eq:Weyl}
\end{equation}
Equivalently, replacing $1/4$ by $3/4$ multiplies the statistic by $(-1)^h$ and leaves its vanishing/nonvanishing content unchanged. This is a conditional version of exponential sums appearing in Weyl's criterion \cite{Weyl1916}. Candidate classes include prime shells, pure prime powers, semiprimes, fixed $\omega(N_k)$, fixed $\Omega(N_k)$, and richer factorization signatures. Under phase independence, these conditional moments should tend to zero; a persistent nonzero harmonic would indicate arithmetic conditioning of the residual phase.

Preliminary controls against $\omega(N_k)$ and $\Omega(N_k)$ show essentially no dependence at the tested scale, and short-range nonlinear Bernoulli-mode and simple prime-factor phase summaries likewise show no statistically convincing coupling. These are retained as negative controls rather than interpreted as evidence for the stronger hypothesis.

\section{Prime-coordinate modes and scalar collapse}
The arithmetic role classification is most precise before scalar projection.

\begin{definition}[Prime-exponent state space]
Let $c_{00}(\Pp)$ denote the real vector space of finitely supported sequences indexed by the primes, with canonical basis $(e_r)_{r\in\Pp}$. For
\[
m=\prod_{r\in\Pp}r^{a_r},
\]
define
\begin{equation}
\alpha(m)=\sum_{r\in\Pp}a_r e_r\in c_{00}(\Pp).
\label{eq:alpha}
\end{equation}
The logarithmic functional $\mathcal L:c_{00}(\Pp)\to\Rr$ is
\begin{equation}
\mathcal L\!\left(\sum_r x_r e_r\right)=\sum_r x_r\log r.
\label{eq:logfunctional}
\end{equation}
For integer states, $\mathcal L(\alpha(m))=\log m$.
\end{definition}

On the central branch of the integer-exponent level family, define
\[
k_1=\frac n2+i\frac{n\log\varrho}{2\theta},
\qquad
\Delta_r=i\frac{n\log r}{2\theta}.
\]
Then
\begin{equation}
k_m=k_1+i\frac{n\log m}{2\theta}.
\label{eq:km}
\end{equation}

\begin{proposition}[Prime powers and multi-prime composites]
Let $m=\prod_r r^{a_r}$.
\begin{enumerate}[label=(\alph*)]
\item If $m=r^a$ is a prime power, then
\[
\alpha(m)=ae_r,\qquad k_{r^a}=k_1+a\Delta_r.
\]
Thus successive powers of one prime form an arithmetic trajectory with increment $\Delta_r$.
\item For a general integer,
\[
\alpha(m)=\sum_r a_r e_r,\qquad
k_m=k_1+\sum_r a_r\Delta_r.
\]
Hence a multi-prime composite is a finite superposition of prime-coordinate occupations before scalar projection.
\end{enumerate}
\end{proposition}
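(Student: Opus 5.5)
The plan is to reduce both parts to the linearity of the logarithmic functional $\mathcal L$ on $c_{00}(\Pp)$, combined with the affine form \eqref{eq:km} of the central branch. I will start by identifying $k_m$ with the $j=0$ member of the integer-exponent level family \eqref{eq:kfamily} at $q=m$. That member is $k_0=\tfrac n2+i\,\tfrac{n\log(m\varrho)}{2\theta}$. Splitting $\log(m\varrho)=\log\varrho+\log m$ then reproduces \eqref{eq:km} with the stated $k_1$. This uses $u\neq0$, so $\theta\neq0$ and the formula is well defined. It also uses the principal logarithm, which is legitimate because $\varrho>0$ and $m>0$ are real. This first step is pure bookkeeping from the earlier proposition.

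\textbf{Part (b).} For $m=\prod_r r^{a_r}$, unique factorization makes the exponent vector $\alpha(m)=\sum_r a_re_r$ well defined and finitely supported. The definition \eqref{eq:alpha} therefore gives the first identity immediately. For the second, I write $\log m=\sum_r a_r\log r=\mathcal L(\alpha(m))$, a finite sum. Substituting into \eqref{eq:km} and distributing the factor $i\,n/(2\theta)$ across the finite sum gives
\[
k_m=k_1+\sum_r a_r\, i\frac{n\log r}{2\theta}=k_1+\sum_r a_r\Delta_r .
\]
The interpretive sentence is just a restatement of this fact: the displacement $k_m-k_1$ is the image of $\alpha(m)$ under the linear map $e_r\mapsto\Delta_r$. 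That map factors through $\mathcal L$ as $x\mapsto i\,n\,\mathcal L(x)/(2\theta)$, which is where the scalar collapse occurs.

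\textbf{Part (a).} This is the special case of (b) in which the support of $\alpha(m)$ is the single prime $r$, so $\alpha(r^a)=ae_r$ and $k_{r^a}=k_1+a\Delta_r$. The trajectory claim follows by differencing consecutive exponents: $k_{r^{a+1}}-k_{r^a}=\Delta_r$ for every $a\ge0$. The sequence $(k_{r^a})_{a\ge0}$ is therefore arithmetic with increment $\Delta_r$, starting from $k_{r^0}=k_1$, which is consistent with $\alpha(1)=0$.

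No step is a genuine obstacle. The only point needing care is branch consistency: every $k_m$ must be taken on the same central branch $j=0$ with the same principal $\theta=\arctan(2u)$, so that the $\pi j/\theta$ terms do not enter and the increments are purely imaginary. I will state this explicitly. I will also note that $m=1$, which gives $\log 1=0$ and the empty product, recovers $k_1$, so the affine description covers all positive integers.
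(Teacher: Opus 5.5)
Your proposal is correct and takes the same approach as the paper, whose proof is the one-line observation that unique factorization gives $\log m=\sum_r a_r\log r$ and that substituting this into \eqref{eq:km} yields both parts. Your derivation of \eqref{eq:km} from the $j=0$ branch of \eqref{eq:kfamily} at $q=m$ and your branch-consistency remark only make explicit what the paper leaves implicit.
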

\begin{proof}
Unique factorization gives $\log m=\sum_r a_r\log r$. Substitution into \eqref{eq:km} yields both statements.
\end{proof}

\begin{remark}[Mode versus eigenstate]
The word \emph{mode} is exact here in the coordinate sense: $(e_r)$ is the canonical basis of prime-exponent space. The word \emph{eigenstate} would require an additional operator for which these basis vectors are eigenvectors; no such operator is assumed.
\end{remark}

\begin{proposition}[Affine-log collapse]
For fixed $n$ and $u\neq0$, the central-branch scalar level coordinate is the affine functional
\begin{equation}
k_m=k_1+B\,\mathcal L(\alpha(m)),\qquad B=i\frac{n}{2\theta}.
\label{eq:affine}
\end{equation}
Equivalently, $k_m=A+B\log m$ for constants $A,B$ independent of $m$.
\end{proposition}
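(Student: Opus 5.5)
The plan is to read the affine formula directly off the central-branch expression \eqref{eq:kfamily} with $j=0$ and $q=m$, and then rewrite $\log m$ through the logarithmic functional of the prime-exponent state. Fix $n\in\Nn$ and $u\neq0$. Then $\varrho=\sqrt{1/4+u^2}$ and $\theta=\arctan(2u)$ are fixed, and $\theta\neq0$, so every quotient by $\theta$ is legitimate.

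First, substitute $q=m$ and $j=0$ into the integer-exponent level family \eqref{eq:kfamily}:
\[
k_m=\frac n2+i\frac{n\log(m\varrho)}{2\theta}.
\]
Split the logarithm as $\log(m\varrho)=\log\varrho+\log m$, which is valid because $m$ and $\varrho$ are positive reals. Taking $m=1$ identifies the constant part with $k_1=\frac n2+i\frac{n\log\varrho}{2\theta}$. This recovers \eqref{eq:km} in the form $k_m=k_1+B\log m$ with $B=i\,n/(2\theta)$. Set $A=k_1$. Neither $A$ nor $B$ involves $m$; they depend only on the fixed data $(n,u)$. This already gives the equivalent form $k_m=A+B\log m$.

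Second, apply the Definition of the prime-exponent state space. Unique factorization gives $m=\prod_r r^{a_r}$ with finitely many nonzero $a_r$, so $\alpha(m)\in c_{00}(\Pp)$. By linearity of $\mathcal L$,
\[
\mathcal L(\alpha(m))=\sum_r a_r\log r=\log m.
\]
Substituting this into the first step yields \eqref{eq:affine}.

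It remains to check that \eqref{eq:affine} really is an affine functional on $c_{00}(\Pp)$. The map $x\mapsto k_1+B\,\mathcal L(x)$ is a constant plus a scalar multiple of a linear functional, and its restriction to integer states $\alpha(m)$ is $k_m$. This is consistent with the preceding Proposition on prime powers and multi-prime composites, since $B\,\mathcal L(e_r)=\Delta_r$. There is no genuine obstacle here. The only point needing care is the branch bookkeeping: the principal logarithm and the central branch $j=0$ must be used uniformly for all $m$, so that the constants $A$ and $B$ are the same for every $m$. The hypothesis $u\neq0$ is exactly what guarantees $\theta\neq0$ and hence that $B$ is defined.
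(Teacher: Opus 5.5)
Your proposal is correct and follows essentially the paper's route. The paper states this proposition without a separate proof. It rests on \eqref{eq:km}, which is exactly the $j=0$, $q=m$ case of \eqref{eq:kfamily}. Substituting $\log m=\mathcal L(\alpha(m))$ from the prime-exponent definition then gives the result, just as you do, with $u\neq0$ ensuring $\theta\neq0$.
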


\begin{corollary}[What is hidden and what is preserved]
The scalar map $\alpha(m)\mapsto k_m$ hides the separate prime-coordinate entries but does not destroy uniqueness of integer factorization. If $k_m=k_{m'}$ for positive integers $m,m'$ under fixed $n,u\neq0$, then $m=m'$ and therefore $\alpha(m)=\alpha(m')$.
\end{corollary}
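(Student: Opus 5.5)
The corollary should follow from the Affine-log collapse proposition together with injectivity of the real logarithm on the positive integers and unique factorization. Assume $k_m=k_{m'}$ for positive integers $m,m'$, with $n$ and $u\neq0$ fixed. By \eqref{eq:affine},
\[
k_m-k_{m'}=B\bigl(\log m-\log m'\bigr),\qquad B=i\frac{n}{2\theta},
\]
so the hypothesis gives $B(\log m-\log m')=0$. The plan is to show $B\neq0$, cancel it to get $\log m=\log m'$, and then use strict monotonicity of $\log$ on $(0,\infty)$ to obtain $m=m'$.

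The only step that needs any checking is $B\neq0$, because it depends on $\theta$ being finite and nonzero. From \eqref{eq:polar}, $\theta=\arctan(2u)$. For real $u\neq0$ this lies in $(-\pi/2,\pi/2)\setminus\{0\}$, so $\theta$ is finite and nonzero. Since $n\in\Nn$ is positive, $B=in/(2\theta)$ is then a well-defined nonzero purely imaginary number. This is exactly where the hypothesis $u\neq0$ enters: at $u=0$ we have $\theta=0$, so $B$ is undefined, and the central branch \eqref{eq:kfamily} breaks down as well.

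Once $m=m'$ is established, $\alpha(m)=\alpha(m')$ holds trivially, because $\alpha$ in \eqref{eq:alpha} is a well-defined function of $m$, and unique factorization makes its exponent vector unambiguous. To address the ``hidden'' part of the statement, I would note that $\mathcal L$ in \eqref{eq:logfunctional} is not injective on all of $c_{00}(\Pp)$. For example, $\log 2\,e_3-\log 3\,e_2$ is a nonzero vector with $\mathcal L=0$. Individual prime-coordinate entries therefore cannot be read off linearly from the scalar $k_m$ for arbitrary real states. On the image $\alpha(\Zz_{>0})$, however, injectivity holds. This is the preservation claim, and it is equivalent to the $\Qq$-linear independence of $\{\log r\}_{r\in\Pp}$, which follows from unique factorization. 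I expect no real obstacle; the only care required is the nonvanishing of $\theta$.
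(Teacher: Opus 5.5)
Your proposal is correct and follows the route the paper intends. The paper states the corollary without a separate proof, reading it off from the affine-log collapse $k_m=k_1+B\log m$: here $B=in/(2\theta)\neq0$ because $u\neq0$ gives $\theta=\arctan(2u)\neq0$, so injectivity of $\log$ on $(0,\infty)$ yields $m=m'$ and hence $\alpha(m)=\alpha(m')$. Your explicit check that $B\neq0$, which the paper leaves implicit, and your kernel example for $\mathcal L$ on $c_{00}(\Pp)$ are sound additions rather than a different approach.
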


Before projection, the prime factors occupy independent canonical coordinates. After projection, only the weighted sum $\sum_r a_r\log r$ is visible. Recovering the components from the integer $m$ is precisely the factorization problem. In a Bohr lift, the same integer is encoded by its prime multi-index $(a_r)$, and Dirichlet-series coefficients become Fourier or power-series coefficients indexed by those multi-indices \cite{Hedenmalm1997}. The present scalar level coordinate does not replace that richer space; it is a specific affine logarithmic projection arising from the Bernoulli level equation.

\section{Continuation to a complex spectral exponent}
We now replace the integer total exponent $n$ by a distinct complex variable $s\in\Cc$ and use \eqref{eq:F}. The half real part derived earlier belongs to the Bernoulli coordinate $z$ and normalized split coordinate $\kappa$; no restriction $\Re s=1/2$ is imposed. At this stage $F$ is a complex analytic extension of the multiplicative Bernoulli kernel, not a probability mass function.

\begin{proposition}[Lift of a Dirichlet atom]
Let $u\neq0$, $s\in\Cc$, and $m>0$. Under the principal-branch convention, the solutions of
\begin{equation}
F(z,s,k)=m^{-s}
\label{eq:atomlevel}
\end{equation}
are
\begin{equation}
k_j(s;m,u)=\frac s2+\frac{\pi j}{\theta}+i\frac{s\log(m\varrho)}{2\theta},\qquad j\in\Zz.
\label{eq:complexlift}
\end{equation}
The central branch is
\begin{equation}
k_m(s;u)=\frac s2+i\frac{s\log(m\varrho)}{2\theta}.
\label{eq:centralcomplex}
\end{equation}
\end{proposition}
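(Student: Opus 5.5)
The plan is to repeat the argument used for the integer-exponent level family, with the real integer $n$ replaced by the complex exponent $s$ and the level $q^{-n}$ replaced by $m^{-s}$. The key step is to give the multivalued powers a precise meaning under the principal-branch convention:
\[
z^k:=e^{k\Log z},\qquad (1-z)^{s-k}:=e^{(s-k)\Log(1-z)},\qquad m^{-s}:=e^{-s\log m}.
\]
Since $m>0$, $\log m$ is the real logarithm. Since $u\neq 0$ puts $z$ and $1-z$ in the open right half-plane, the polar form \eqref{eq:polar} gives $\Log z=\log\varrho+i\theta$ and $\Log(1-z)=\log\varrho-i\theta$, with $\theta=\arctan(2u)\in(-\pi/2,\pi/2)\setminus\{0\}$.

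Adding the exponents, the left side of \eqref{eq:atomlevel} equals $\exp(E)$ with
\[
E=k(\log\varrho+i\theta)+(s-k)(\log\varrho-i\theta)=s\log\varrho+i\theta(2k-s).
\]
This is the same computation as in the integer-exponent case, and it remains valid because the exponents are only multiplied and added, never split across a branch cut. The equation $F(z,s,k)=m^{-s}$ then becomes $e^{E}=e^{-s\log m}$. Since $\exp$ has kernel $2\pi i\Zz$, this holds if and only if
\[
s\log\varrho+i\theta(2k-s)=-s\log m+2\pi i j\qquad\text{for some }j\in\Zz .
\]

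The equation is linear in $k$, and its coefficient $2i\theta$ is nonzero because $u\neq0$. Solving for $k$ gives
\[
2k-s=\frac{2\pi j}{\theta}+\frac{i\,s\,(\log\varrho+\log m)}{\theta}.
\]
Dividing by $2$ and combining $\log\varrho+\log m=\log(m\varrho)$ yields \eqref{eq:complexlift}. Every step is reversible, so these $k_j$ are exactly all the solutions. Setting $j=0$ gives the central branch \eqref{eq:centralcomplex}.

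The main point needing care is the branch convention, not any computation. For complex $s$, the quantity $m^{-s}$ no longer lies on the positive real axis. The reduction to $e^{E}=e^{-s\log m}$ must therefore be justified through the definition of complex powers, not through the earlier magnitude-and-phase argument, which used positivity of $q^{-n}$. I will state explicitly that $m^{-s}$ is taken as $e^{-s\log m}$ and that $z^k(1-z)^{s-k}$ is interpreted as the product of the two principal powers. With this convention the lattice index $j$ accounts for all solutions, and no branch ambiguity enters. I will also note that $u\neq0$ is used exactly once, to guarantee $\theta\neq0$.
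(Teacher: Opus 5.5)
Your proposal is correct and follows the paper's proof. You compute the exponent $s\log\varrho+i\theta(2k-s)$ of the principal-branch product, equate it to $-s\log m+2\pi i j$, and solve the resulting linear equation for $k$ using $\theta\neq0$. Phrasing this as $e^{E}=e^{-s\log m}$ together with the kernel $2\pi i\Zz$ of $\exp$ is slightly cleaner than the paper's ``$\Log F$'', since that expression need not be the principal logarithm of $F$. One small correction: $z$ and $1-z$ lie in the open right half-plane because $\Re z=1/2$, not because $u\neq0$.
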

\begin{proof}
Using the principal logarithm,
\[
\Log F=s\log\varrho+i\theta(2k-s).
\]
Equation \eqref{eq:atomlevel} is equivalent to
\[
s\log\varrho+i\theta(2k-s)=-s\log m+2\pi i j,
\]
which gives \eqref{eq:complexlift}.
\end{proof}

For fixed $s$ and $u$, the central lift again has the form $A+B\log m$. The scalar coordinate is therefore an explicit re-encoding of the ordinary logarithmic embedding. Arithmetic information beyond that scalar coordinate resides in the prime multi-index $\alpha(m)$ and in how those coordinates are assembled.

\section{The two zeta faces from the common atom}
Define
\begin{equation}
K(m,s)=m^{-s}=e^{-s\log m}.
\label{eq:commonatom}
\end{equation}
The previous section realizes this kernel as a level value of $F$.

\subsection{Dirichlet face: enumeration}
For $\Re s>1$,
\begin{equation}
\sum_{m\ge1}F(z,s,k_m)=\sum_{m\ge1}m^{-s}=\zeta(s).
\label{eq:dirichletface}
\end{equation}
The level construction produces individual integer atoms; the zeta function is obtained by global enumeration.

\subsection{Euler face: prime-power generation}
The geometric identity
\[
(1-x)^{-1}=\sum_{a\ge0}x^a
\]
is the shape-one negative-binomial generating identity. For a prime $r$ and $\Re s>1$, setting $x=r^{-s}$ gives
\begin{equation}
(1-r^{-s})^{-1}=\sum_{a\ge0}r^{-as}=\sum_{a\ge0}K(r^a,s).
\label{eq:eulerfactor}
\end{equation}
Thus a local Euler factor generates the full prime-power tower $1,r,r^2,r^3,\ldots$ through the same atom $K$. Multiplication over primes gives
\begin{equation}
\prod_{r\in\Pp}\left(\sum_{a\ge0}K(r^a,s)\right)
=\prod_{r\in\Pp}(1-r^{-s})^{-1}
=\zeta(s).
\label{eq:eulerface}
\end{equation}
Unique factorization makes every finite choice of occupations $(a_r)$ correspond to exactly one integer $m$.

The two faces can therefore be summarized as follows:
\[
\text{Dirichlet face: enumerate complete integer states }m,
\]
\[
\text{Euler face: generate those states from prime-power occupations.}
\]
The common interface is the atom $m^{-s}$.

\section{Relation to probabilistic zeta constructions}
For real $\sigma>1$, the normalized weights
\[
\Pp_\sigma(N=m)=\frac{m^{-\sigma}}{\zeta(\sigma)}
\]
define the classical zeta distribution. Its Euler product leads to infinite-divisibility and compound-Poisson descriptions supported on logarithmic prime-power locations; see Saito and Tanaka \cite{SaitoTanaka2012}. Completed-zeta characteristic-function constructions and RH-related divisibility criteria have been developed by Nakamura \cite{Nakamura2015} and Nakamura--Suzuki \cite{NakamuraSuzuki2023}.

The present paper uses probability at a different entry point. It starts from a Bernoulli likelihood kernel and then analytically extends its exponent split. Once $n$ is replaced by complex $s$, $F(z,s,k)$ is no longer interpreted as a probability mass. Accordingly, no new claim about infinite divisibility or characteristic functions follows merely from the level-set lift. The probabilistic literature is relevant because it confirms that logarithmic prime-power coordinates and Euler-factor decompositions already have precise stochastic realizations; the contribution here is the explicit Bernoulli level geometry that maps integer atoms into a centered complex $k$-coordinate family.

\section{Coordinate scope, limitations, and relation to the Riemann Hypothesis}
Three coordinate levels must remain distinct.

\paragraph{Primitive Bernoulli coordinate.}
The binary level and continuation rule construct $z=1/2\pm iu$ directly. The quadratic coordinate $Q(z)=z(1-z)$ then gives a real continuous level with an exact integer quantization.

\paragraph{Normalized split coordinate.}
The central branch $\kappa=k/n$ also lies on $\Re\kappa=1/2$. Its imaginary coordinate is generated by the inverse-integer level equation and is related to, but not identical with, the $u$ coordinate of $z$.

\paragraph{Later Dirichlet exponent.}
The variable $s$ introduced in the complex lift is distinct and unrestricted. The paper therefore does not obtain the Riemann Hypothesis by inserting or deriving a condition $\Re s=1/2$ for zeta zeros. Instead, the critical-line geometry appears earlier in the Bernoulli/split coordinates and is compared with zeta-zero data through the diagnostic coordinate $L_k=1/4+\gamma_k^2$.

The zero-sensitive comparison does not identify individual prime powers with individual zeros. The classical explicit formula supplies a global prime-power/zero coupling, while the conditional Weyl statistics ask a narrower question: whether arithmetic classes of the integer shell condition the residual phase. Exact identities, classical bridges, negative numerical controls, and this open statistical question are separate checkable layers.

The logarithmic formulas use the principal branch for $z=1/2+iu$ and $1-z$. Other logarithm branches produce shifted level lattices, so the branch convention is part of the coordinate specification. Likewise, for fixed parameters the scalar coordinate contains no more factorization information than $\log m$; its value is to make explicit, inside one analytic kernel, the projection from independent prime-coordinate occupations to a scalar complex level. The term \emph{superposition} refers to finite-support prime-exponent space, not to a quantum-mechanical state or an asserted spectral operator.

\section{Conclusion}
The construction can be read in five aligned layers. First, the starting object is the likelihood of one ordered Bernoulli word, $p^k(1-p)^{n-k}$. The first nonunit inverse-integer level $2^{-n}$ uniquely selects $p=1/2$, and complement-preserving complex continuation generates the conjugation-symmetric vertical line $1/2\pm iu$. The normalized central split coordinate independently inherits the same real part.

Second, the integers that label the construction separate by arithmetic role. The integer $2$ is distinguished by its binary-anchor function; prime powers are repeated occupations of one prime generator; composites with at least two distinct prime divisors are finite multi-generator superpositions. This factorization classification acts on a geometry already obtained from the primitive level problem.

Third, the quadratic coordinate $Q(z)=1/4+u^2$ adds an exact real level geometry with a sharp minimum and explicit inverse. Restriction to integer values gives a discrete arithmetic quantization. Passing tabulated critical-line zeros through $L_k=1/4+\gamma_k^2$ supplies a diagnostic of the coordinate; the first forty-zero prime-hit control is deliberately negative.

Fourth, every zero-induced level decomposes exactly as
\[
L_k=N_k+\delta_k,\qquad
\delta_k=\wtB\!\left(\gamma_k^2+\frac34\right),
\]
so the integer shell and centered periodic-Bernoulli residual are complementary coordinates. Circularization gives $Z_k=e^{2\pi i\delta_k}=i e^{2\pi i\gamma_k^2}$, removing the integer shell and isolating $\gamma_k^2\bmod1$. The local $\wtB$ role is distinct from the higher Bernoulli numbers occurring upstream in Euler--Maclaurin/zeta analysis. The classical explicit formula couples the global von Mangoldt prime-power spectrum to the zero spectrum, while the first coarse shell--phase tests are negative. The remaining question is statistical: conditional Weyl moments ask whether arithmetic classes of $N_k$ leave persistent harmonics in $Z_k$.

Fifth, the full complex level-set lift introduces a distinct spectral exponent $s$ and realizes every Dirichlet atom $m^{-s}$ inside the same multiplicative architecture. Prime-exponent space retains the full factorization state, while the scalar level coordinate collapses it through $\log m$. The atom then presents the two classical zeta faces in complementary roles: the Dirichlet series enumerates already assembled integers, whereas the Euler product generates them from local prime-power towers.

The resulting framework is therefore algebraic and organizational rather than a proof of RH. It creates a precise interface between continuous conjugation-symmetric geometry, discrete arithmetic factorization, Bernoulli residual structure, and the classical prime-power/zero spectrum while keeping exact statements, known theory, finite numerical controls, and open tests distinct.

\appendix
\section{Numerical diagnostics and statistical controls}
\label{app:diagnostics}

\subsection{Exact level decomposition}
For each tabulated critical-line ordinate $\gamma_k$, define
\[
L_k=\frac14+\gamma_k^2,\qquad
N_k=\left\lfloor L_k+\frac12\right\rfloor,\qquad
\delta_k=L_k-N_k.
\]
The first ten levels are shown in Table~\ref{tab:first10}. The factorization and residual are displayed side by side without asserting that either algebraically determines the other.

\begin{table}[H]
\centering
\caption{First ten quadratic level coordinates and nearest-integer arithmetic shells.}
\label{tab:first10}
\small
\begin{tabular}{rrrrll}
\toprule
$k$ & $\gamma_k$ & $L_k$ & $N_k$ & factorization of $N_k$ & $\delta_k$\\
\midrule
1 & 14.134725 & 200.040455 & 200 & $2^3\cdot5^2$ & $+0.040455$\\
2 & 21.022040 & 442.176151 & 442 & $2\cdot13\cdot17$ & $+0.176151$\\
3 & 25.010858 & 625.792997 & 626 & $2\cdot313$ & $-0.207003$\\
4 & 30.424876 & 925.923087 & 926 & $2\cdot463$ & $-0.076913$\\
5 & 32.935062 & 1084.968282 & 1085 & $5\cdot7\cdot31$ & $-0.031718$\\
6 & 37.586178 & 1412.970789 & 1413 & $3^2\cdot157$ & $-0.029211$\\
7 & 40.918719 & 1674.591566 & 1675 & $5^2\cdot67$ & $-0.408434$\\
8 & 43.327073 & 1877.485279 & 1877 & $1877$ & $+0.485279$\\
9 & 48.005151 & 2304.744511 & 2305 & $5\cdot461$ & $-0.255489$\\
10 & 49.773832 & 2477.684400 & 2478 & $2\cdot3\cdot7\cdot59$ & $-0.315600$\\
\bottomrule
\end{tabular}
\end{table}

\subsection{Circular residual geometry}
\begin{figure}[H]
\centering
\includegraphics[width=0.95\linewidth]{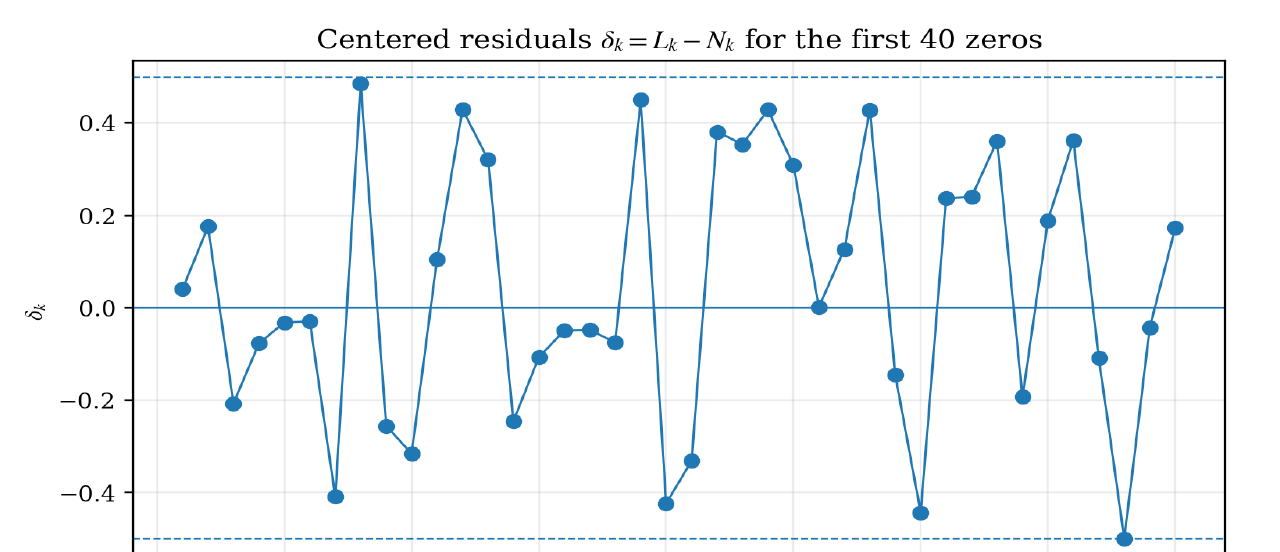}
\caption{Centered residuals $\delta_k=L_k-N_k$ for the first forty zeros. The interval $[-1/2,1/2)$ is intrinsic to nearest-integer centering. The visual absence of a simple trend motivates circular/Fourier diagnostics rather than ordinary linear regression.}
\label{fig:residuals}
\end{figure}

\begin{figure}[H]
\centering
\includegraphics[width=0.72\linewidth]{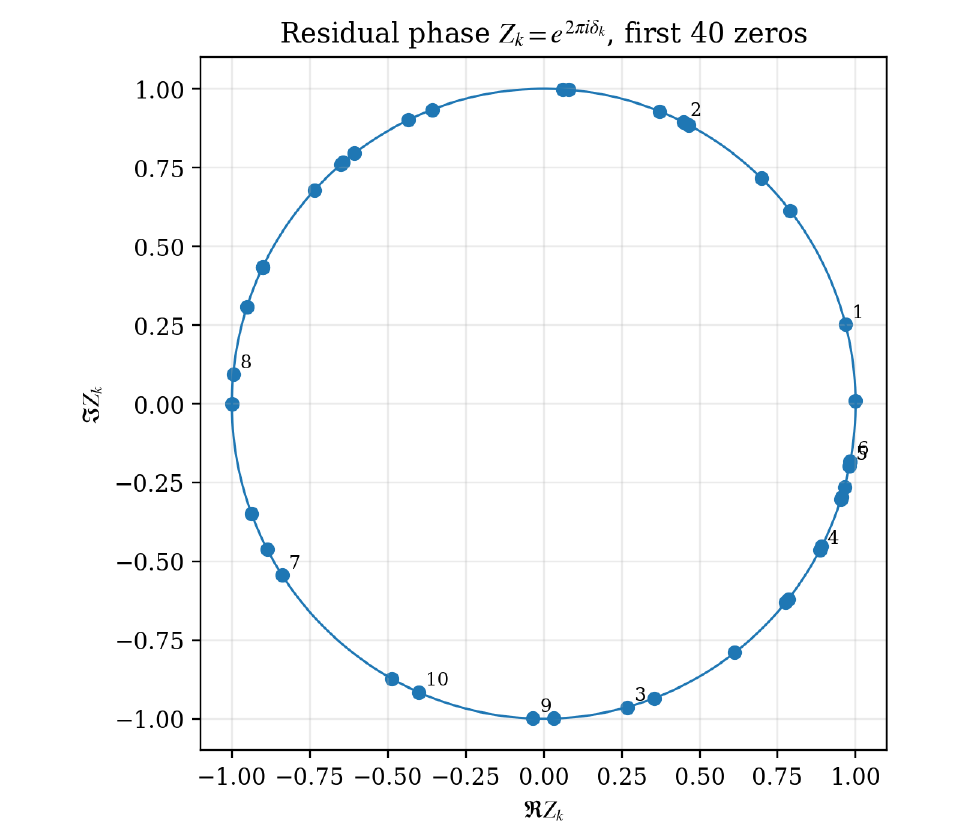}
\caption{Unit-circle representation $Z_k=e^{2\pi i\delta_k}$ for the first forty zeros. Labels 1--10 identify the first ten levels. Circularization removes the artificial discontinuity between residuals near $+1/2$ and $-1/2$.}
\label{fig:circle}
\end{figure}

\subsection{Fourier/Weyl diagnostics and coarse factorization controls}
\begin{figure}[H]
\centering
\includegraphics[width=0.95\linewidth]{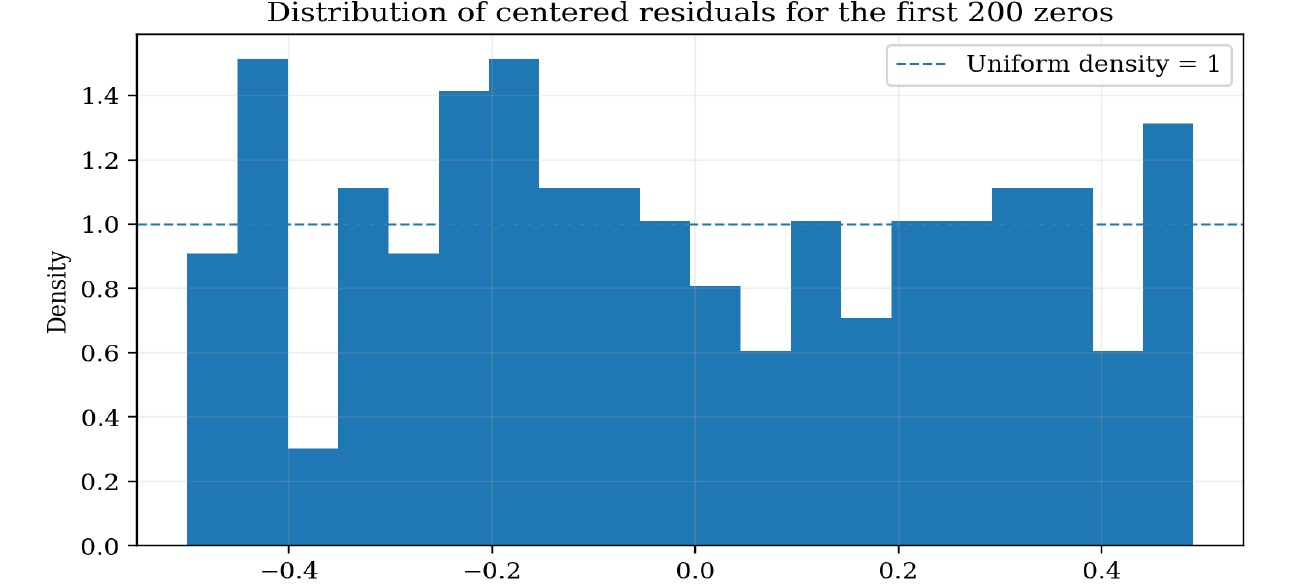}
\caption{Histogram of the first 200 centered residuals. The dashed line is the density of $U[-1/2,1/2)$. This is a finite-sample diagnostic only and does not establish equidistribution.}
\label{fig:hist}
\end{figure}

\begin{figure}[H]
\centering
\includegraphics[width=0.95\linewidth]{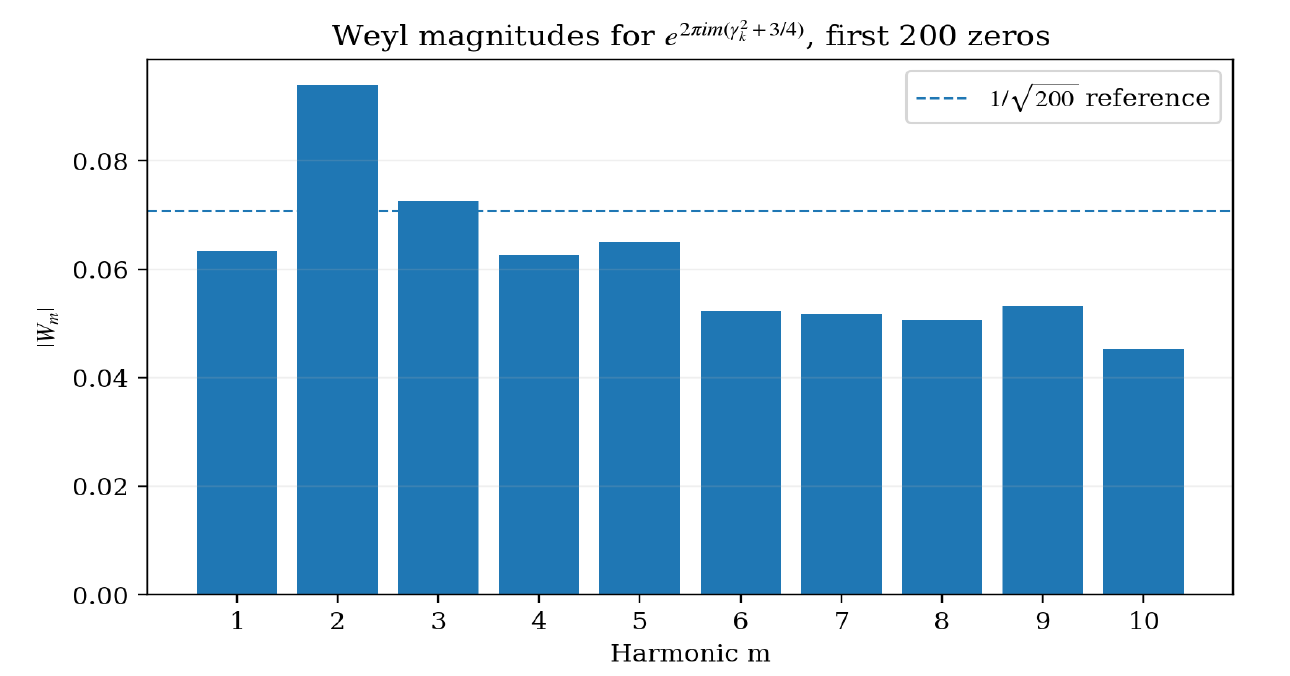}
\caption{First ten Weyl magnitudes for $e^{2\pi i m(\gamma_k^2+3/4)}$, using the first 200 zeros. The dashed reference is $1/\sqrt{200}$.}
\label{fig:weyl}
\end{figure}

\begin{table}[H]
\centering
\caption{Recomputed first-200 residual diagnostics.}
\label{tab:diagnostics}
\begin{tabular}{lr}
\toprule
Diagnostic & Value\\
\midrule
Sample size & 200\\
Mean of $\delta_k$ & $-0.012976$\\
Variance of $\delta_k$ & $0.083552$\\
Uniform benchmark variance & $0.083333$\\
KS statistic vs. $U[-1/2,1/2)$ & $0.0642$\\
KS $p$-value & $0.3661$\\
$\operatorname{corr}(\delta,\omega(N))$ & $-0.1106$\\
$\operatorname{corr}(\delta,\Omega(N))$ & $-0.1288$\\
$\operatorname{corr}(|\delta|,\omega(N))$ & $-0.0839$\\
$\operatorname{corr}(|\delta|,\Omega(N))$ & $-0.1074$\\
\bottomrule
\end{tabular}
\end{table}

\begin{figure}[H]
\centering
\includegraphics[width=0.95\linewidth]{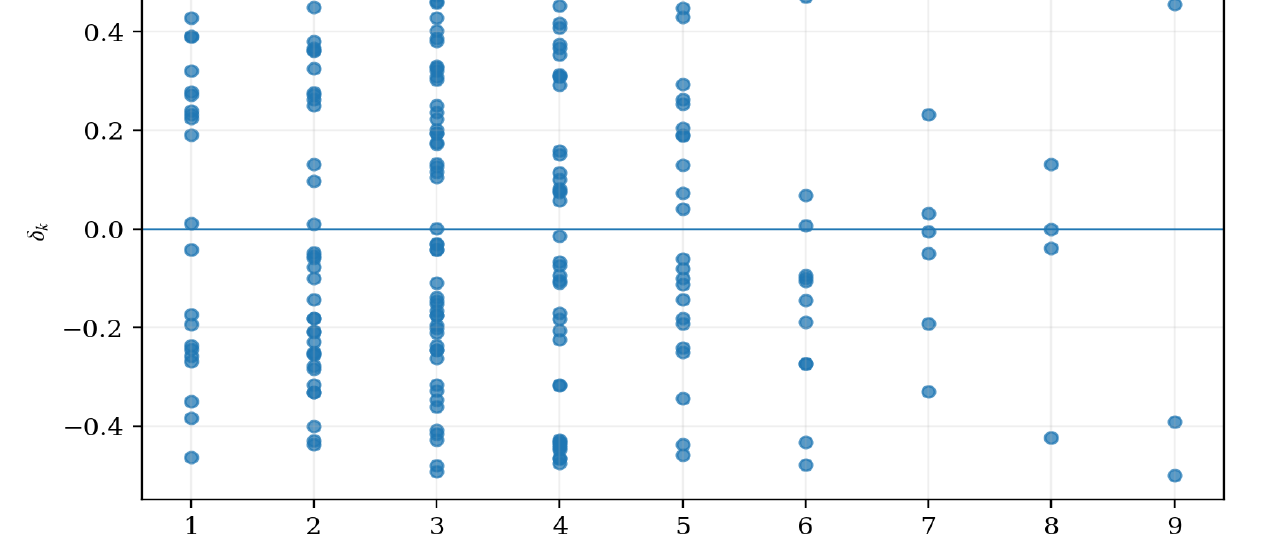}
\caption{Residual versus total number of prime factors $\Omega(N_k)$ for the first 200 levels. The scatter illustrates why a coarse factor-count statistic can discard too much of the prime-generator structure for a stronger shell--phase hypothesis.}
\label{fig:omega}
\end{figure}

\subsection{Bernoulli hierarchy illustration}
\begin{figure}[H]
\centering
\includegraphics[width=0.95\linewidth]{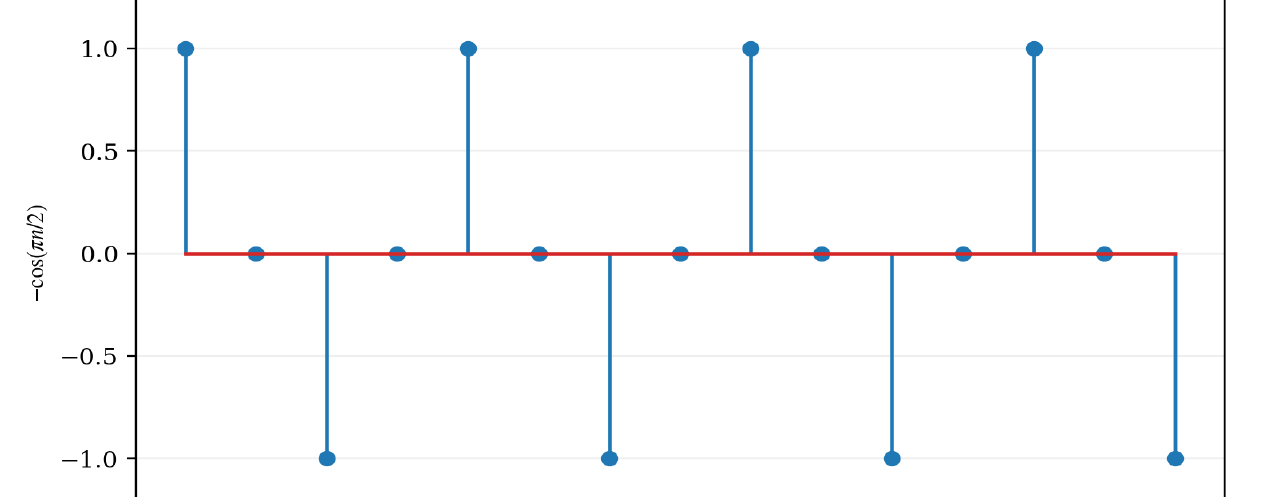}
\caption{Exact sign/parity skeleton $-\cos(\pi n/2)$ of the Bernoulli-number sequence for $n\ge2$.}
\label{fig:bernoulli-sign}
\end{figure}

\begin{table}[H]
\centering
\caption{Bernoulli objects used in distinct roles.}
\label{tab:bernoulli-roles}
\small
\begin{tabular}{p{0.21\linewidth}p{0.33\linewidth}p{0.36\linewidth}}
\toprule
Object & Exact/classical form & Role in the present framework\\
\midrule
Periodic $\wtB$ & $\delta_k=\wtB(\gamma_k^2+3/4)$ & Exact centered residual readout\\
Higher Bernoulli polynomials & $B_m(\{\gamma_k^2+3/4\})$ & Nonlinear modes of the same modular coordinate\\
Bernoulli numbers & Eq.~\eqref{eq:Bn} & Indexed parity/sign and correction hierarchy\\
Euler--Maclaurin & $B_2,B_4,\ldots$ as correction coefficients & Classical upstream link to zeta analysis\\
\bottomrule
\end{tabular}
\end{table}

\subsection{Preliminary prime-power reconstruction diagnostics}
The following reconstruction figures use the finite-window prime-power reconstruction values recorded during the working analysis. They are retained because they illustrate sensitivity and circular error, but they are explicitly preliminary: the reconstruction pipeline is not used as a premise of any exact statement in the manuscript.

\begin{figure}[H]
\centering
\includegraphics[width=0.95\linewidth]{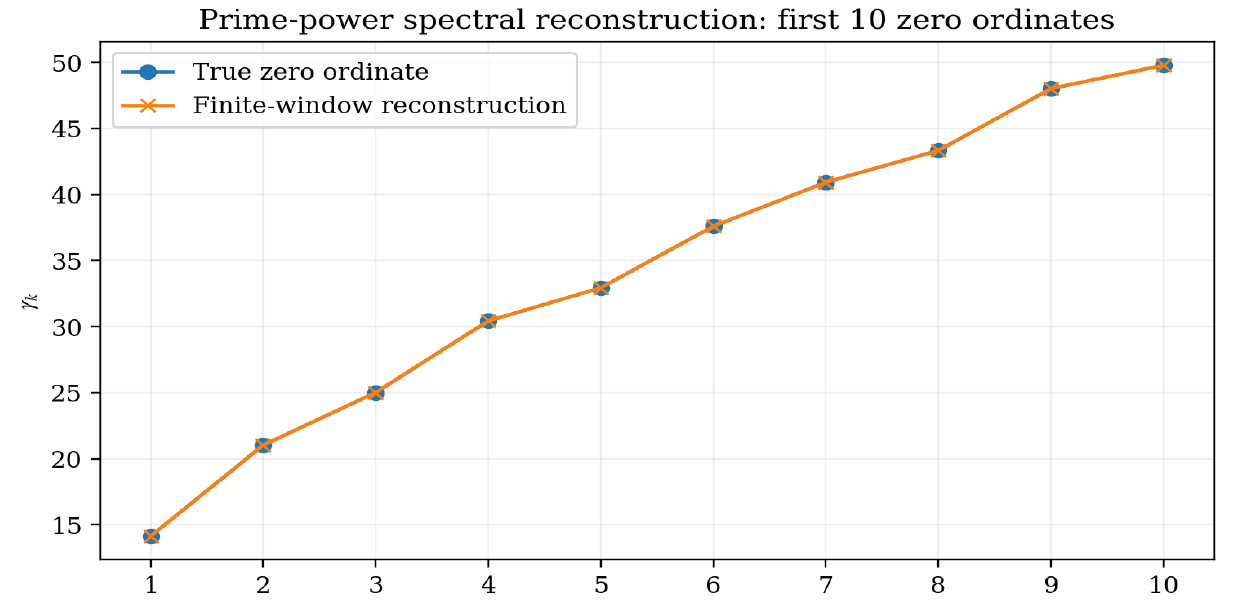}
\caption{True versus finite-window reconstructed zero ordinates for $k=1,\ldots,10$. The curves nearly overlap for the early ordinates, while $dL\approx2\gamma\,d\gamma$ explains how a small ordinate error can become a much larger residual-phase error.}
\label{fig:reconstruction}
\end{figure}

\begin{figure}[H]
\centering
\includegraphics[width=0.95\linewidth]{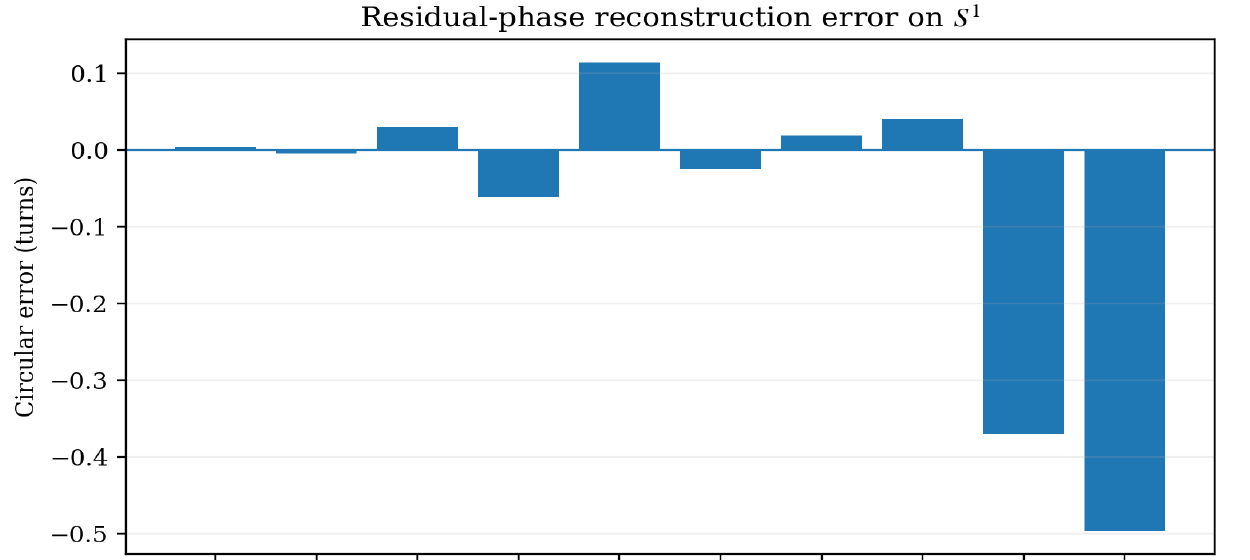}
\caption{Circular reconstruction error measured in turns. The wraparound behavior illustrates why residual errors should be compared on $S^1$ rather than by ordinary subtraction near $\pm1/2$.}
\label{fig:circular-error}
\end{figure}

\subsection{Arithmetic shell classes for conditional Weyl tests}
\begin{table}[H]
\centering
\caption{Candidate arithmetic conditioning classes.}
\label{tab:classes}
\small
\begin{tabular}{p{0.24\linewidth}p{0.30\linewidth}p{0.36\linewidth}}
\toprule
Class $\mathcal A$ & Definition/example & Purpose\\
\midrule
Prime shell & $N_k$ prime & Single-generator arithmetic shell\\
Pure prime power & $N_k=p^a$, $a\ge2$ & One base prime with repeated exponent\\
Semiprime & $\Omega(N_k)=2$ & Minimal two-factor composite class\\
Fixed $\omega$ & $\omega(N_k)=r$ & Controls number of distinct base primes\\
Fixed $\Omega$ & $\Omega(N_k)=r$ & Controls multiplicity-weighted complexity\\
Factorization signature & e.g. exponent partition $(3,2)$ for $2^3 5^2$ & Retains richer generator structure\\
\bottomrule
\end{tabular}
\end{table}

Among the first forty zero-induced shells, the prime cases occur at $k=8,14,24,32,39$. This is the same $5/40$ count already compared with the PNT heuristic in the main text. The appendix figures are diagnostic and are not used as evidence for the exact identities proved above.

\clearpage
\bibliographystyle{plain}
\bibliography{references}

\end{document}